\tikzset{
  stochvar/.style={shape=circle, draw, align=center, inner sep=0.5mm},
  decvar/.style={shape=rectangle, align=center, draw},
  avar/.style={shape=rectangle, rounded corners, align=center, draw},
  vertex/.style={shape=circle, draw, align=center, minimum size=6.25mm, inner sep=1mm},
  arc/.style={>=stealth, shorten >=1pt},
  loarc/.style={>=stealth, shorten >=1pt, ->, dashed},
  hiarc/.style={>=stealth, shorten >=1pt, ->}
}
\newtheorem{definition}{Definition}
\newtheorem{theorem}{Theorem}
\algnewcommand\algorithmicinput{\textbf{INPUT:}}
\algnewcommand\INPUT{\item[\algorithmicinput]}
\algnewcommand\algorithmicoutput{\textbf{OUTPUT:}}
\algnewcommand\OUTPUT{\item[\algorithmicoutput]}
\algnewcommand{\IIf}[1]{\State\algorithmicif\ #1\ \algorithmicthen}
\algnewcommand{\EndIIf}{\unskip\ \algorithmicend\ \algorithmicif}
\algnewcommand{\IFor}[1]{\State\algorithmicfor\ #1\ \algorithmicdo}
\algnewcommand{\EndIFor}{\unskip\ \algorithmicend\ \algorithmicfor}
\algnewcommand{\IfThenElse}[3]{
  \State \algorithmicif\ #1\ \algorithmicthen\ #2\ \algorithmicelse\ #3}
\algnewcommand\algorithmicswitch{\textbf{switch}}
\algnewcommand\algorithmiccase{\textbf{case}}
\crefname{corollary}{corrolary}{corrolaries}
\crefname{definition}{definition}{definitions}
\crefname{equation}{equation}{equations}
\crefname{figure}{figure}{figures}
\crefname{lemma}{lemma}{lemmas}
\crefname{section}{section}{sections}
\crefname{table}{table}{tables}
\begin{document}
%

\title{Stochastic Constraint Optimization using Propagation on Ordered Binary Decision Diagrams}

\author{Anna L.D. Latour \\ LIACS, Leiden University \\ Leiden, The Netherlands \\ \url{a.l.d.latour@liacs.leidenuniv.nl}
	\And	Behrouz Babaki \\ \'Ecole Polytechnique de Montr\'eal \\ Montreal, Canada
	\And	Siegfried Nijssen \\ ICTEAM, Universit\'e catholique de Louvain \\ Louvain-la-Neuve, Belgium \\ \url{siegfried.nijssen@uclouvain.be}}

\maketitle
\begin{abstract}
A number of problems in relational Artificial Intelligence can be viewed as Stochastic Constraint Optimization Problems (SCOPs). These are constraint optimization problems that involve objectives or constraints with a stochastic component. Building on the recently proposed language SC-ProbLog for modeling SCOPs, we propose a new method for solving these problems. Earlier methods used Probabilistic Logic Programming (PLP) techniques to create Ordered Binary Decision Diagrams (OBDDs), which were decomposed into smaller constraints in order to exploit existing constraint programming (CP) solvers. We argue that this approach has as drawback that a decomposed representation of an OBDD does not guarantee domain consistency during search, and hence limits the efficiency of the solver. For the specific case of {\em monotonic} distributions, we suggest an alternative method for using CP in SCOP, based on the development of a new propagator; we show that this propagator is linear in the size of the OBDD, and has the potential to be more efficient than the decomposition method, as it maintains domain consistency.
\end{abstract}

\section{Introduction}
\label{sec:intro}

%

Making decisions under uncertainty is an important problem in business, governance and science. Examples are found in the fields of planning and scheduling, but also occur naturally in fields like data science and bioinformatics. Many of these problems are {\em relational} in nature.

Consider for example a {\em viral marketing problem}~\cite{kempe2003:MaximizingTheSpreadOfInfluence}. We are given a social network of people (vertices) that have stochastic relationships (edges). We  want to rely on word-of-mouth advertisement to turn acquaintances of people who buy our product into new product-buyers. How can we minimize the number of people we need to target directly in a marketing campaign, while a minimum number of people is expected to buy the product?

For another example we are given a network of stochastic protein-gene interactions, with a list of (protein, gene) pairs that are of interest to a biologist~\cite{ourfali2007:spine}. We wish to reduce the network to the part that is relevant for modeling the interactions in that list. This is known as a {\em theory compression problem}~\cite{deraedt2008:CompressingProbLogPrograms}. How can we maximize the sum of interaction probabilities for the interesting pairs, while restricting the number of edges included in the extracted network?

These two problems have common features. First, both problems combine probabilistic networks and decision problems: we either decide who to target in our marketing campaign, or which interactions to select from a protein-gene interaction network. Second, they both involve an objective: minimizing the number of people targeted for marketing and maximizing a sum of probabilities, respectively. Third, they both have to respect a constraint: either reaching a target with respect to the expected number of product-buyers or limiting the number of edges we select for our biologist.

The motivation for our ongoing work is that there is a need for generic tools that can be used to model and solve such problems. In our vision, these tools should combine the state-of-the-art of {\em probabilistic programming} (PP) with {\em constraint programming} (CP).
{\em Probabilistic programming} here provides mechanisms for calculating probabilities of paths in probabilistic networks. 
For making decisions, {\em constraint programming} provides well-established technology.

Note that the stochastic constraint in the viral marketing setting is a hard constraint on a sum of probabilities: we impose a bound on the expected number of people buying the product. This is a different setting than the soft constraints that can be expressed using {\em maximum a posteriori} (MAP) inference or {\em maximum probability estimation} (MPE).

Problems that involve these kinds of hard constraints on probabilities are the focus of the field of {\em stochastic constraint programming} (SCP)~\cite{walsh2002:SCP}, which combines probabilistic inference and constraint programming to solve {\em Stochastic Constraint Optimization Problems} (SCOPs). SCP is closely related to {\em chance constraint programming}~\cite{charnes1959:ChanceCP} and {\em probabilistic constraint programming}~\cite{tarim2009:FindingReliableSolutions}. However, these tools do not provide a modeling language suitable for solving relational problems in a generic manner, and do not link to the probabilistic programming literature.


Recently we proposed a new modeling language and a new tool chain that addresses the problem of modeling and solving relational SCOPs. This language, {\em Stochastic Constraint Probabilistic Prolog} (SC-ProbLog)~\cite{latour2017:CombiningSCOAndPP}, is based on (Decision Theoretic) ProbLog~\cite{deraedt2007:ProbLog,vandenbroeck2010:DTProbLog}, and is therefore particularly suited for modeling probabilistic paths. It extends ProbLog with syntax for specifying SCOPs that are formulated on probabilistic networks, and a tool chain for solving them. Building on ProbLog, SC-ProbLog has functionality for translating a Probabilistic Logic program in Boolean formulas, converting those formulas into Ordered Binary Decision Diagrams (OBDDs) for tractable {\em weighted model counting} (WMC), converting these OBDDs into Arithmetic Circuits (ACs) and decomposing these into Mixed Integer Programs (MIPs), which in turn serve as input for an off-the-shelve MIP solver or CP solver that solves the SCOP.

The main contribution of this paper is a modification of the last step in this pipeline. While in earlier work, constraint optimization solvers were used as a black boxes on decomposed OBDDs, we propose to open the black box in this work.
We will demonstrate that the {\em propagation} that is used in constraint satisfaction solvers, is not optimal for the constraints resulting from decomposition. Specifically, we will show that constraint propagation is not {\em domain consistent}: a search algorithm will branch over variables unnecessarily. To address this flaw, we first introduce a na\"ive propagation algorithm over OBDDs that is domain consistent, and whose worst case complexity is $O(mn)$, where $m$ is the size of the OBDD and $n$ is the number of decision variables. Note that propagation is executed at every node of the search tree; any reduction of $O(mn)$ to a lower complexity affects each node of the search tree. We will then show how to calculate partial derivatives over the OBDDs~\cite{darwiche2003:ADifferentialApproach}, and use these derivatives to reduce the complexity of domain consistent propagation to $O(m+n)$. Here we build on earlier results for linear derivative computation on computational graphs~\cite{iri1984:SimultaneousComputation,rote1990:PathProblemsInGraphs} and computation graphs for the deterministic Decomposable Negation Normal Form (d-DNNF)~\cite{darwiche2001:TractableCounting}. This is a more efficient approach for the calculation of derivatives than the one proposed in~\cite{gutmann2008:LeProbLog}. Furthermore, we will argue that our approach enables the creation of incremental constraint propagation algorithms; this allows for propagation that is more efficient than $O(m+n)$ in practice. Our method assumes the stochastic constraint to have a particular {\em monotonic} property, which we discuss in more detail in \cref{sec:modelling}.

In this paper, we first give a description of how typical SCOPs can be modeled using SC-ProbLog, followed by a discussion on how they can be solved. In \cref{sec:cp} we provide a short introduction to some key concepts of CP, which we use in \cref{sec:approach} to introduce a proposal for an OBDD-based stochastic constraint propagator for CP systems. We conclude this work with an outlook on future research.

\section{Modeling SCOPs with SC-ProbLog}
\label{sec:modelling}

The goal of SC-ProbLog~\cite{latour2017:CombiningSCOAndPP} is to provide a generic system for modeling and solving SCOPs. In this section we give an example SCOP and explain how it can be modeled using SC-ProbLog. Before we address that, let us first define the kinds of SCOP that we consider in this work.

\subsection{Problem Definition}

We consider problems that are defined on two types of variables: {\em decision variables} and mutually independent {\em stochastic variables} (denoted in this work as $d_i$ and $t_i$, respectively). The problems involve a (stochastic) objective function and a set of (stochastic) constraints, all of which can be expressed in terms of these variables. We consider an optimization criterion or constraint to be stochastic if its definition involves stochastic variables. The aim is to find an assignment to the decision variables (also referred to as a {\em strategy}) such that the constraints are respected and the objective satisfied.

In this work we restrict our focus to variables that can take Boolean values. We can assign a value of {\em true} or {\em false} to decision variables, while the value of stochastic variables is mutually independently determined by chance, characterized by an associated probability.

We consider a selection of constraints and objective functions. In particular, we consider constraints that represent a bound on expected utilities and objective functions that maximize or minimize an expected utility, e.g.:
\begin{equation}
	\begin{aligned}
	&\sum_i r_i v_i \geq \theta \quad &\text{stochastic constraint}	\\
	&\max \sum_i r_i v_i	\quad &\text{stochastic optimization criterion}
	\end{aligned}
	\label{eq:cst-opt}
\end{equation}
where $v_i$ either represents the value of a decision variable $d_i$, or a conditional probability $P\left(\phi_i \mid \sigma\right)$. Here $\phi_i$ represents an event, and the conditional probability represents the probability of that event happening (i.e. $\phi_i$ evaluating to {\em true}), given a strategy $\sigma$. With $v_i$ we associate a reward $r_i \in \mathbb{R}^+$, such that the expressions in \cref{eq:cst-opt} represent {\em expected utilities}. For simplicity we will assume $r_i = 1$ in this work, but note that generalizing our approach to $r_i\neq 1$ is trivial. Finally $\theta$ is a {\em threshold} for the constraint. 

Intuitively, in the optimization criterion of the viral marketing problem, $v_i$ represents the probability of the event $\phi_i$ that  person $i$ buys a product, given a marketing strategy. The marketing strategy is represented by decision variables $d_i$.

In this work we impose an additional {\em monotonicity} condition on each probability $P\left(\phi_i \mid \sigma\right)$: we require that for any strategy $\sigma$, switching the value of any decision variable from {\em false} to {\em true}, will yield a probability that is not smaller: $P\left(\phi_i \mid \sigma\right)\geq P\left(\phi_i \mid \sigma'\right)$, if $\sigma'$ differs from $\sigma$ by one variable that is {\em true} in $\sigma'$ but {\em false} in $\sigma$. This condition is met in all the example problems mentioned earlier.

In this work we will consider solving stochastic {\em constraints} rather than stochastic {\em optimization criteria}. However, it is easy to use our results in optimization as well: we can solve a problem involving the {\em optimization criterion} in \cref{eq:cst-opt} by repeatedly solving a constraint satisfaction problem involving the {\em constraint} in \cref{eq:cst-opt}, increasing $\theta$ each time we have found a solution until we find a $\theta$ for which there exists no solution.



\subsection{An Example SCOP}

\begin{figure}
 \centering
 \begin{tikzpicture}
  \node (a) at (-.5,0) {$a$};
  \node (b) at (1,.65) {$b$};
  \node (c) at (1,-.65) {$c$};
  \node (d) at (2.5,0) {$d$};

  \path
		(a) edge node[midway, fill=white] {.7} (b)
				edge node[midway, fill=white] {.4} (c)
				edge node[midway, fill=white] {.8} (d)
		(b) edge node[midway, fill=white] {.5} (d)
		(c)	edge node[midway, fill=white] {.1} (d);

  \end{tikzpicture}
  \caption{A small network of four nodes ($a$, $b$, $c$ and $d$) and five undirected edges with associated probabilities.}
  \label{fig:small-example-network}
\end{figure}
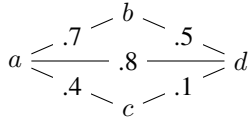

Consider the network in \cref{fig:small-example-network}, and suppose that information can flow through each edge with a certain probability. We can formulate a theory compression problem as described in \cref{sec:intro} on this network. Suppose we want to maximize the sum of probabilities that information can flow from $a$ to $c$ and from $a$ to $d$, but we want to limit the number of edges in the network, such that there are no more than 2 (cardinality constraint). We can model this as follows:
\begin{itemize}
 \item with each edge $(i, j)$ in the network we associate a stochastic variable $t_{ij}$ and a decision variable $d_{ij}$;
 \item with each variable $t_{ij}$ we associate a probability $p\left(t_{ij}\right)$;
 \item the events considered are $\phi_{a\rightarrow c}$ and $\phi_{a\rightarrow d}$, which represent flow of information from $a$ to $c$ and from $a$ to $d$;
 \item our objective is to find a $\sigma$ that maximizes $ P\left(\phi_{a\rightarrow c} \mid \sigma\right) + P\left(\phi_{a\rightarrow d} \mid \sigma\right)$;
 \item our constraint is $\sum_k d_k \leq 2$.
\end{itemize}
Subsequently, we need to define the probability of events $\phi_{a\rightarrow c}$ and $\phi_{a\rightarrow d}$, given a strategy. Here, we use a WMC approach. We use a logical formula  to represent when an event is {\em true}, given an assignment to the decision variables and a sample for the stochastic variables:
\begin{equation}
 \begin{split}
  \phi_{a\rightarrow c} = &\left(d_{ac} \wedge t_{ac}\right) \vee \left(d_{ad} \wedge t_{ad} \wedge d_{cd} \wedge t_{cd}\right) \vee \\
  &\left(d_{ab} \wedge t_{ab} \wedge d_{bd} \wedge t_{bd} \wedge d_{cd} \wedge t_{cd}\right).
 \end{split}
 \label{eq:paths}
\end{equation}
Here, if $t_{ij}$ and $d_{ij}$ are {\em true}, then information can travel through edge $(i,j)$. The logical formula represents all the ways in which information can travel from  $a$ to $c$.

The probability $P\left(\phi_{a\rightarrow c} \mid \sigma\right)$ is then defined as the sum of the probabilities of all the (logical) models of this formula. Given strategy $\sigma=(d_{ac}=d_{ad}=d_{cd}=d_{ab}=d_{bd}=\top)$, one model is for instance $t_{ac}=\top, t_{ab}=t_{ad}=t_{cd}=t_{ab}=\bot$, of which the probability is $.4\cdot (1-.8)\cdot(1-.1)\cdot (1-.5)\cdot (1-.7)$; in principle, we sum the probabilities of all such models to obtain $P\left(\phi_{a\rightarrow c} \mid \sigma\right)$. Note that \cref{eq:paths} has indeed a monotonic property: the more decision variables are {\em true}, the higher the probability of the event is.

To program such a formulas in a generic manner, as well as to define constraints and optimization criteria, we proposed  SC-ProbLog~\cite{latour2017:CombiningSCOAndPP}, which is also based on weighted model counting. The following program in SC-ProbLog would model the problem described above:
{ \scriptsize		
\begin{alltt}
    \emph{% Deterministic facts}
1.  node(a).     node(b).     node(c).     node(d).

    \emph{% Probabilistic facts}
2.  0.7::t(a,b).      0.8::t(a,d).      0.5::t(b,d).
3.  0.4::t(a,c).      0.1::t(c,d).

    \emph{% Decision variables}
4.  ?::d(a,b).        ?::d(a,d).        ?::d(b,d).
5.  ?::d(a,c).        ?::d(c,d).

    \emph{% Relations}
6.  e(X,Y) :- t(X,Y), d(X,Y).  e(Y,X) :- t(X,Y), d(X,Y).
7.  path(X,Y) :- e(X,Y).
8.  path(X,Y) :- X \textbackslash= Y, e(X,Z), path(Z,Y).

    \emph{% Constraints and optimization criteria}
9.  \verb|{ d(X,Y) => 1 :- node(X), node(Y). } 2.|
10. \verb|#maximize { path(a,c) => 1. path(a,d) => 1. }.|
\end{alltt}}
Here, we define the nodes in the network on line 1. Lines 2 and 3 associate the correct probability with each edge; these are the stochastic variables. We define the decision variables in lines 4 and 5. Edges are made undirected in line 6 and we give the definition of a path in lines 7 and 8. 
In line 9 we define the constraint: we assign a utility of 1 to each decision variable that is {\em true}. We also specify that we only allow decision variables that reflect the edges between nodes that are actually present in the network. Finally, line 10 represents the optimization criterion: we assign a utility of 1 to there being a path from $a$ to $c$ and to there being a path from $a$ to $d$. The utilities are summed and weighted by the actual probability of there being such paths. The logical formulas $\phi_{a \rightarrow c}$ and $\phi_{a \rightarrow d}$ are constructed from the program by ProbLog.

An interesting feature of SC-ProbLog is that any problem that does not contain negation or negative weights, represents a monotonic utility function. We restrict our attention in this work to such functions.

In the next section we briefly discuss how to compute the probabilities of such formulas efficiently and how to solve the SCOP of which they are a part.

\section{Solving SCOPs using CP}
\label{sec:solving}

We assume that the reader is familiar with ProbLog\footnote{\url{https://dtai.cs.kuleuven.be/problog}}. In case of absence of that familiarity, we refer the reader to the literature, e.g.~\cite{deraedt2007:ProbLog,fierens2015:InferenceAndLearning}. We start this section with a short recap of why ProbLog uses {\em knowledge compilation} to obtain OBDDs; subsequently, we discuss how OBDDs can be used to solve na\"ively the associated SCOP. Then we discuss the earlier proposed tool chain for solving SCOPs~\cite{latour2017:CombiningSCOAndPP} and reflect on it.

\subsection{From ProbLog to OBDD}
Consider \cref{eq:paths}, and observe that computing $P\left(\phi_{a\rightarrow c} \mid \sigma\right)$ is complicated, as the different paths need to be enumerated, but may also overlap. Therefore, computing this probability involves a disjoint sum problem; in the general case WMC is \#P-complete~\cite{roth1996:HardnessOfApproximateReasoning}. 

In ProbLog the tractability of this task is addressed by compiling the formulas during a preprocessing phase into a Sentential Decision Diagram (SDD)~\cite{darwiche2011:SDD} or OBDD that allows for tractable WMC. The advantage of this method is that, once this diagram is compiled, computing $P\left(\phi \mid \sigma\right)$ has a complexity that is linear in the size of the diagram, thus reducing the complexity of the WMC task (at a cost of having to preprocess the formula). This work focuses on stochastic constraints that can be expressed by OBDDs. We assume familiarity with OBDDs, for we will only discuss a few of their characteristics here. For a more extensive overview, see for example~\cite{ben-ari2012:MathematicalLogicForCS}.

\begin{figure}[ht]
 \centering
 \begin{tikzpicture}
	\newcommand\xscale{1}
	\newcommand\yscale{.9}
 
	\node 					(q) at 		(-1*\xscale, 0*\yscale) 	{$P(\phi_{a\rightarrow c})$};
	\node[stochvar]	(t1) at 	(-1*\xscale, -1*\yscale) {$t_{cd}$};
	\node[decvar] 	(d1) at 	(1*\xscale, -1.5*\yscale) {$d_{cd}$};
	\node[decvar] 	(d2l) at 	(-2*\xscale, -4.5*\yscale) {$d_{ac}$};
	\node[decvar] 	(d2r) at 	(2.5*\xscale, -2*\yscale) {$d_{ac}$};
	\node[stochvar] (t2l) at 	(-1*\xscale, -6.25*\yscale) {$t_{ac}$};
	\node[stochvar] (t2r) at 	(3.75*\xscale, -3*\yscale) {$t_{ac}$};
	\node[stochvar] (t3) at 	(0.75*\xscale, -3.5*\yscale) {$t_{ad}$};
	\node[decvar] 	(d3) at		(3*\xscale, -4.5*\yscale) {$d_{ad}$};
	\node[decvar]		(d4) at		(0*\xscale, -5*\yscale) {$d_{bd}$};
	\node[stochvar]	(t4) at		(1*\xscale, -5.5*\yscale) {$t_{bd}$};
	\node[stochvar] (t5) at		(2.25*\xscale, -6.25*\yscale) {$t_{ab}$};
	\node[decvar] 	(d5) at		(3*\xscale, -7.25*\yscale) {$d_{ab}$};
	
	\node (0) at (-1*\xscale, -8.25*\yscale) {$0$};
	\node (1) at (3.75*\xscale, -8.25*\yscale) {$1$};

  \path 
		(t1) edge[->, color=gray] (q)
		(t1)	edge[hiarc] node[midway,fill=white, inner sep=.5mm] {\footnotesize $.1$}  (d1)
					edge[loarc] node[midway,fill=white, inner sep=.5mm] {\footnotesize $.9$} (d2l)
		(d1)	edge[loarc] (d2l)
					edge[hiarc] (d2r)
		(d2l)	edge[hiarc] (t2l)
					edge[loarc] (0)
		(d2r)	edge[loarc] (t3)
					edge[hiarc] (t2r)
		(t2l)	edge[loarc] node[near start,fill=white, inner sep=.5mm] {\footnotesize $.6$} (0)
					edge[hiarc] node[near end,fill=white, inner sep=.5mm] {\footnotesize $.4$} (1)
		(t2r)	edge[loarc] node[midway,fill=white, inner sep=.5mm] {\footnotesize $.6$}(t3)
					edge[hiarc] node[midway,fill=white, inner sep=.5mm] {\footnotesize $.4$} (1)
		(t3)	edge[loarc] node[midway,fill=white, inner sep=.5mm] {\footnotesize $.2$} (d4)
					edge[hiarc] node[midway,fill=white, inner sep=.5mm] {\footnotesize $.8$} (d3)
		(d3)	edge[loarc] (d4)
					edge[hiarc] (1)
		(d4)	edge[loarc] (0)
					edge[hiarc] (t4)
		(t4)	edge[loarc] node[near start,fill=white, inner sep=.5mm] {\footnotesize $.5$} (0)
					edge[hiarc] node[midway,left, fill=white, inner sep=.5mm] {\footnotesize $.5$} (t5)
		(t5)	edge[loarc] node[near start,fill=white, inner sep=.5mm] {\footnotesize $.3$} (0)
					edge[hiarc] node[midway,left,fill=white, inner sep=.5mm] {\footnotesize $.7$} (d5)
		(d5)	edge[loarc] (0)
					edge[hiarc] (1)
		;
		
	\path (-3*\xscale, -6*\yscale) edge[->] node[midway, above, sloped] {probability} (-3*\xscale, -3*\yscale);

  \end{tikzpicture}
  \caption{An OBDD representing the probability that information can travel from $a$ to $c$ in \cref{fig:small-example-network}, i.e. the probability that $\phi_{a\rightarrow c}$ evaluates to true given any strategy $\sigma$. The variable order corresponding to this OBDD is $t_{cd} < d_{cd} < d_{ac} < t_{ac} < t_{ad} < d_{ad} < d_{bd} < t_{bd} < t_{ab} < d_{ab}$. Circular nodes represent stochastic variables, squares represent decision variables. No specific strategy is reflected here.}
  \label{fig:obdd}
\end{figure}

To see how we can compute $P\left(\phi \mid \sigma\right)$ using an OBDD, consider \cref{fig:obdd}. It shows an OBDD that represents the probability of \cref{eq:paths} evaluating to {\em true}. The weights on the outgoing arcs of nodes that represent stochastic variables (those labeled with $t_{ij}$) correspond to the probability that that variable is {\em true} (for the solid, or {\em hi}, arcs) or {\em false} (dashed, or {\em lo}, arcs). A strategy $\sigma$ is represented in the OBDD by adding weights of 0 and 1 to the outgoing arcs of the nodes corresponding to decision variables (those labeled with $d_{ij}$). For example: if we choose $d_{ac} = \bot$, we put a weight of 0 on the outgoing hi arc of nodes labeled with $d_{ac}$ and weight 1 on their outgoing lo arcs. 

Given a strategy $\sigma$ and arcs labeled accordingly, the OBDD can straightforwardly be mapped to an Arithmethic Circuit (AC). We can compute $P(\phi_{a\rightarrow c} \mid \sigma)$ as follows. In a bottom-up traversal, each OBDD node $r$ takes the value 
\begin{equation}
 v(r) = w(r) \cdot v\left(r^+\right) + \left(1 - w(r)\right) \cdot v\left(r^-\right),
 \label{eq:values}
\end{equation}
where $r^+$ ($r^-$) is the {\em hi} ({\em lo}) child of $r$, i.e. the child connected through the solid (dashed) outgoing arc of $r$; $v(r)=0$ for the negative leaf and $v(r)=1$ for the positive leaf. Observe that $v(root) = P\left(\phi \mid \sigma\right)$.

The complexity of evaluating $P(\phi \mid \sigma)$ is thus linear in the size of the OBDD, but the number of strategies is $2^{n}$, with $n$ the number of decision variables. The na\"ive way of solving a SCOP is to enumerate all possible strategies, use the OBDD to evaluate the objective function and/or constraints for each strategy, evaluate possible other constraints, and store the best feasible strategy found so far. Since the number of strategies is exponential in the number of decision variables, this na\"ive method does not scale well.

\subsection{Solving SCOPs with the SC-ProbLog tool chain}
Since SCOPs are constraint optimization problems, one obvious approach to improving on the na\"ive method is to leverage the state-of-the-art CP solvers that are available. The tool chain described in~\cite{latour2017:CombiningSCOAndPP} takes the OBDD generated by ProbLog and instead of assigning weights to the outgoing arcs of the nodes in the OBDD that represent decision variables, converts the OBDD into an AC in which those weights are present as boolean decision variables. 

A constraint is imposed on the value of the AC, and the then decomposed into a Mixed Integer Program (MIP); a set of smaller constraints is constructed that represent the value at each node of the OBDD according to \cref{eq:values}.
See \cref{fig:problem-obdd} for an example of what such a MIP may look like.

As mentioned in \cref{sec:intro}, this method has a disadvantage: during the search process, the solver cannot guarantee domain consistency on the MIP representing the constraint. We propose an alternative to this decomposition method in \cref{sec:approach}, but will first make the notion of some basic CP concepts, including domain consistency, more concrete.

\section{Introduction to Constraint Programming}
\label{sec:cp}

Constraint programming is an area that studies the development of modeling languages and solvers for constraint satisfaction and optimization problems.
Two processes form the basis of Constraint Programming solvers: {\em search} and {\em propagation}. We briefly discuss these concepts, for they are critical to understanding our contributions in this work. For a more comprehensive overview of CP, we refer the reader to the literature, e.g. {\em Principles of Constraint Programming}~\cite{apt2003:PrinciplesOfConstraintProgramming}. Then we continue with a discussion of the relation between these principles and the circuit decomposition method~\cite{latour2017:CombiningSCOAndPP}.

\subsection{Search and Propagation}
The search process is some structured method for exploring the search space of the problem. In our SCOP setting, the search space consists of all possible assignments to the (binary) decision variables, from which we need to find one that satisfies the constraints and optimizes the objective function.

The details of the search process are outside the scope of this work, but for search over binary variables the process is roughly as follows. Initially, all variables are considered to be free or unassigned; they have a {\em domain} of $\{0,1\}$. Then repeatedly a free decision variable $d$ is selected, and fixed to a value (either {\em true} or {\em false}). After each such assignment, {\em propagation} is used to determine whether other variables can be fixed.  Propagation is the process of updating the domains of other {\em free variables}, making them reflect the consequences of the assignments made to decision variables (the {\em fixed} variables) so far. If propagation yields a contradiction, the search backtracks over the last variable assignment; otherwise, if a free variable remains, its value is fixed and the search process continues.

The constraints of the problem guide the propagation. For example: the problem may contain a {\em cardinality constraint} that puts an upper bound of $N$ on the number of variables that can be set to {\em true}. Suppose that during search, variable $d$ is selected and fixed to {\em true}, becoming the $N$th decision variable to be {\em true}. Now we know that 1 should be removed from the domain of each remaining free variable. This reduces the search space by making domains smaller.

During propagation two things can happen (possibly simultaneously). The first is that the domain of a free variable becomes empty. This means there is no solution given the current partial assignment to decision variables, so we must backtrack to explore a different part of the search space. Alternatively the domain size of a free variable is reduced to 1, leaving only one possible value for that variable (given the current partial assignment). Such a variable can then be fixed and removed from the set of free variables, reducing the search space by reducing the number of free variables.

There are myriad optimizations for both search and propagation, but these are outside the scope of this work. Observe that both the nature of the search and of the propagation depend on the type of variables and on the nature of the constraint. In this work we focus on developing a propagator that enforces {\em domain consistency} on OBDDs.

\subsection{Domain Consistency}

An important notion in propagation is that of {\em domain consistency}. We define it as follows:
\begin{definition}
Let $\varphi(x_1,\ldots,x_n)$ be a constraint over boolean variables $x_1,\ldots, x_n$. Furthermore, let $\sigma'$ be partial assignment to the variables $x_1,\dots,x_n$. Then a propagator for constraint $\varphi$ is {\em domain consistent} if for any $\sigma'$ this propagator calculates a new partial assignment $\sigma$ satisfying these conditions: (1) $\sigma$ extends $\sigma'$, (2) for all variables $v$ not assigned by $\sigma$, both the partial assignments $\sigma \cup \{v\mapsto \bot\}$ and $\sigma \cup \{v \mapsto \top\}$ can be extended to a complete assignment that satisfies the constraint $\varphi$.
\end{definition}
In other words, after domain consistent propagation for a constraint, all values have been removed from all variable domains that cannot be part of a solution for that constraint.

We illustrate this notion with an example. A standard practice in CP is to call the propagator {\em before} the search starts, in order to make the initial domains consistent with the constraint, and, ideally, detect the variables that are forced to a specific value by the constraint.

\begin{figure}[ht]
 \centering
	\begin{tikzpicture}
		\newcommand\xscale{.9}
		\newcommand\yscale{.9}

		\node 															(f) at 		(0*\xscale, 0*\yscale) 	{$P(\phi)$};
		\node[stochvar,  minimum size=5mm]	(r) at 		(0*\xscale, -1*\yscale) {};
		\node[decvar] 											(x) at 		(1.5*\xscale, -2*\yscale) {$x$};
		\node[decvar] 											(y1) at		(-.5*\xscale, -3*\yscale) {$y_1$};
		\node[decvar] 											(y2) at 	(2.5*\xscale, -3*\yscale) {$y_2$};
		\node[stochvar,  minimum size=5mm]	(s) at 		(0.5*\xscale, -4*\yscale) {};
		\node[stochvar,  minimum size=5mm]	(t) at 		(2.5*\xscale, -4*\yscale) {};

		\node (0) at (-.5*\xscale, -5.5*\yscale) {$0$};
		\node (1) at (2.5*\xscale, -5.5*\yscale) {$1$};

		\path
			(r) edge[->, color=gray] (f)
					edge[hiarc] node[midway,fill=white, inner sep=.5mm] {\footnotesize $.9$}  (x)
					edge[loarc] node[midway,fill=white, inner sep=.5mm] {\footnotesize $.1$} (y1)
			(x)	edge[loarc] (y1)
					edge[hiarc] (y2)
			(y1)	edge[hiarc] (s)
						edge[loarc] (0)
			(y2)	edge[loarc] (t)
						edge[hiarc] (s)
			(s)	edge[loarc] node[midway,fill=white, inner sep=.5mm] {\footnotesize $.4$} (0)
					edge[hiarc] node[near end,fill=white, inner sep=.5mm] {\footnotesize $.6$} (1)
			(t)	edge[loarc] node[near start,fill=white, inner sep=.5mm] {\footnotesize $.7$} (0)
					edge[hiarc] node[midway,fill=white, inner sep=.5mm] {\footnotesize $.3$} (1)
			;

		\node at (4.4, -2.5) {$\begin{aligned}
												P(\phi \mid \sigma) &\geq .4	\\
												.1 v(y_1) + .9 v(x) 			&\geq .4			\\
												v(x)		&= (1-x)  v(y_1) + x v(y_2)		\\
												v(y_1) 	&= .6 y		\\
												v(y_2) 	&= .6 y  + .3 (1-y)	\\
												x, y 		&\in \{0,1\}	\\
												&	\\
												0 \leq 	&P(\phi \mid \sigma) \leq .6	\\
												0 \leq 	&v(x) \leq .6	\\
												0 \leq 	&v(y_1) \leq .6	\\
												.3 \leq &v(y_2) \leq .6
		                   \end{aligned}$}
;

	\end{tikzpicture}

  \caption{A small OBDD (left) with three stochastic variables and two decision variables. The two nodes corresponding to decision variable $y$ are indexed for clarity. The MIP on the right is constructed using \cref{eq:values}. 
  }
  \label{fig:problem-obdd}
\end{figure}
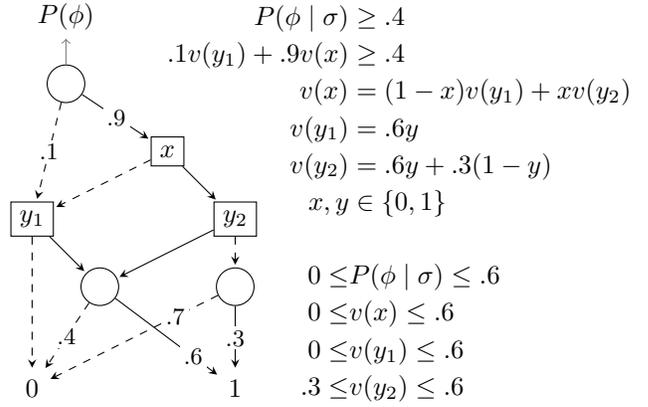

Consider the OBDD in \cref{fig:problem-obdd} and the associated constraint $P(\phi \mid \sigma) \geq .4$. Observe that the four possible strategies yield the following conditional probabilities, which are monotonic in the decision variables:
\begin{equation*}
 \begin{aligned}
  P(\phi \mid x = y = 0) &= 0		& P(\phi \mid x = 1, y = 0) &= .3	\\
  P(\phi \mid x = y = 1) &= .6	& P(\phi \mid x = 0, y = 1) &= .6
 \end{aligned}
\end{equation*}
From this we conclude that only those strategies in which $y = 1$ can possibly satisfy the constraint. A propagator that ensures domain consistency will detect this before the start of the search and fix $y$ to 1.

The circuit decomposition method translates this constraint on the OBDD in a CP model that is also given in \cref{fig:problem-obdd}. Suppose a propagator is called on this decomposed model, before the search starts.  This propagator may start by trying to infer the minimum value $v(y_1)$ needs to take if $v(x)$ takes its maximum possible value. To do this, the propagator assumes for a moment that $v(x) = .6$ holds. Now it can infer that, in order for the constraint to be satisfied, $v(y_1) \geq \sfrac{(.4 - .9 \cdot .6)}{.1} = -1.4$ should hold. Unfortunately, this does not tell us anything, for we already knew that the domain of $y$ is $\{0,1\}$ and thus does not include $-1.4$. Based on this, we cannot remove 0 from the domain of $y$. Repeating this procedure to determine a bound for $v(x)$ by assuming $v(y_1)$ takes it maximum value, and from there continuing to determine bounds for $v(y_1)$ and $v(y_2)$ does not yield conclusive evidence to deduct that $y$ must be fixed to 1, either.

This shortcoming of the circuit decomposition method causes a lack of efficiency, since the search space is not reduced as much as possible. In the next section we introduce a propagator for OBDDs that does ensure domain consistency.



\section{Approach}
\label{sec:approach}
We intend to improve upon the existing circuit decomposition approach for solving SCOPs, by allowing an OBDD-based constraint to be added directly to a CP solver, rather than decomposed into a multitude of (linear) constraints. In order to achieve this, we need to introduce a propagator for OBDDs. As discussed in \cref{sec:cp}, this propagator should guarantee domain consistency in the OBDD.

In this section we will first introduce a na\"ive approach for such a domain consistent propagator. Subsequently, we will show how to obtain a better worst-case complexity by using the idea of {\em derivatives}.

\subsection{Na\"ive Propagator}
As discussed earlier, we can calculate the quality of any strategy with an algorithm that traverses the OBDD bottom-up, using \cref{eq:values}.

For the creation of a domain consistent propagator, our first important observation is that our scoring function is monotonic; hence, the largest possible score is obtained by assigning the value {\em true} to all free decision variables.

The idea behind domain consistent propagation is to repeat the following process for each free decision variable $d$:
\begin{enumerate}
\item fix variable $d$ to the value {\em false};
\item fix all other free variables to the value {\em true};
\item calculate the score for the resulting assignment;
\item if the score is lower than the desired threshold, remove the value {\em false} from the domain of variable $d$.
\end{enumerate}
By construction, this process is domain consistent.

Let $n$ be the number of free decision variables, and let $m$ be the size of the OBDD. Then the complexity of the algorithm above is $O(mn)$: for every free variable we perform a bottom-up traversal of the OBDD. Given that propagation is the most computationally intensive part of search algorithms under our constraint, it is important to obtain a better performance. We will improve this complexity to $O(n+m)$, using an approach similar to that for  d-DNNFs~\cite{darwiche2001:TractableCounting}.

\subsection{Overview of our Propagator}
The key idea behind our improved propagator is that we calculate a derivative
 \begin{equation}
  \frac{\partial f(d, \sigma' \setminus d)}{\partial d} =
  f\left(\sigma'\right) - f\left(d = \bot, \sigma'  \setminus d\right)
 \end{equation}
for every free decision variable $d$. Here,  $\sigma'$ represents a full assignment to all decision variables. In this assignment, every free variable is assumed to have the value {\em true}. Function $f$ represents the function defined by \cref{eq:values} on the root of the OBDD. Hence, $f\left(\sigma'\right)$ represents the best score currently possible, in which all free variables have been given the value $\top$; $f\left(d = \bot, \sigma'  \setminus d\right)$ represents the assignment in which the value for variable $d$ has been switched to $\bot$.

We use the derivative to remove the value {\em false} from the domains of  variables that do not meet this requirement:
 \begin{equation}
	f(\sigma')-\frac{\partial f(d, \sigma' \setminus d)}{\partial d} \geq\theta.
	\label{eq:requirement}
 \end{equation}
Clearly, the main question becomes how to calculate $\sfrac{\partial f(d, \sigma' \setminus d)}{\partial d}$ for all free variables efficiently. Here, we will build on ideas introduced by Darwiche in 2003~\cite{darwiche2003:ADifferentialApproach} to build an $O(m)$ algorithm. This algorithm adapts the ideas of Darwiche to our specific context; we will argue that this enables us to perform propagation for monotonic constraints in an incremental manner, effectively making the complexity lower than $O(m)$.

\subsection{Calculating the Derivative}
\label{subsec:derivative}

We first need to define the concept of {\em path weight}: 
\begin{definition}
 Let $r_m$ be a node labeled with variable $x_m$ in an OBDD with variable order $x_1 < \ldots < x_n$. We define the {\em path weight of $r_m$}:
 \begin{equation}
  \pi(r_m) = \sum_{\ell \in L_{r_m}} \prod_{r_i \in \ell} u_i, \label{eq:pathweight}
 \end{equation}
where $\ell$ is a path from the root of the OBDD to $r_m$, and $L_{r_m}$ is the set of all such paths that are {\em valid}. A path is {\em valid} if it does not include
\begin{itemize}
 \item the hi arc from a node labeled with a decision variable that is {\em false} and
 \item the lo arc from a node labeled with a decision variable that is {\em true} or {\em free}.
\end{itemize}
In other words: we take paths that reflect the current partial assignment, and take the hi arc from {\em free} decision nodes.

In our definition of $u_i$, we use $u_i = 1$ for the outgoing arcs of decision nodes that {\em can} be part of a valid path.

For the outgoing arcs of stochastic nodes labeled with a stochastic variable $x_i$ that has weight $w_i$ as defined in the ProbLog program, we use:
\begin{equation}
u_i = \begin{cases}
				w_i 		& \text{ if we take the hi arc of } r_i; \\
				1 - w_i & \text{ if we take the lo arc of } r_i.
			\end{cases}
\end{equation}

%
Note that the path weight $\pi(r_m)$ is expressed in terms of variables $x_i < x_m$ only.
\end{definition}
An example: if we were to fix $d_{cd}$ and $d_{ac}$ to {\em true} in \cref{fig:obdd}, then the path weight of the node labeled with $t_{ad}$ would be $\pi(r_{t_{ad}}) = .1 \cdot 1 \cdot (1 \cdot .6 + 0) = .06$. 

Our algorithm is based on the observation that derivatives can be calculated using the following equation:
\begin{theorem}
The derivative of the OBDD with respect to a decision variable can be calculated as follows:
\begin{equation}
 \frac{\partial f(d, \sigma'\setminus d)}{\partial d}  = \sum_{r_d \in \text{ OBDD}_d} \pi(r_d)  \left( v(r_d^+) - v(r_d^-)\right),
 \label{eq:partial-derivative-difference}
\end{equation}
where $\text{ OBDD}_d$ represents all nodes in the OBDD labeled with variable $d$.
\end{theorem}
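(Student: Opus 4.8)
The plan is to read the derivative $\partial f(d,\sigma'\setminus d)/\partial d = f(\sigma') - f(d=\bot,\sigma'\setminus d)$ as the difference $v^{+}(root) - v^{-}(root)$ between two bottom-up evaluations of \cref{eq:values}: in the run producing $v^{+}$ the variable $d$ is treated as $\top$ (so $v^{+}(r_d)=v^{+}(r_d^{+})$ at every node $r_d$ labelled $d$), and in the run producing $v^{-}$ it is treated as $\bot$ (so $v^{-}(r_d)=v^{-}(r_d^{-})$); on all other nodes the two runs use identical arc weights. The first thing I would record is a consequence of the diagram being \emph{ordered}: on any path the variables appear in strictly increasing order, so no path meets two $d$-labelled nodes, and therefore the sub-DAG below each child of a $d$-node contains no node labelled $d$. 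Hence $v^{+}$ and $v^{-}$ coincide on every such child; writing $v(r_d^{+})$ and $v(r_d^{-})$ for these common values, the two evaluations differ only through the $d$-nodes.

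Next I would introduce the node-wise difference $\delta(r):=v^{+}(r)-v^{-}(r)$ and read off its recurrence from \cref{eq:values}. At a stochastic node with weight $w$ one gets $\delta(r)=w\,\delta(r^{+})+(1-w)\,\delta(r^{-})$; at a decision node other than a $d$-node, $\delta(r)=\delta(r^{+})$ if its variable is $\top$ or free and $\delta(r)=\delta(r^{-})$ if it is $\bot$; at a leaf $\delta(r)=0$; and at a $d$-node, $\delta(r_d)=v(r_d^{+})-v(r_d^{-})$. So $\delta$ obeys a homogeneous linear recurrence everywhere except at the $d$-nodes, which are its only sources, and it vanishes at every node from which no $d$-node is reachable. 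I would then solve this recurrence by unrolling it from the root: each step rewrites $\delta(r)$ as a sum over the arcs with nonzero coefficient --- the hi arc of a $\top$/free decision node, the lo arc of a $\bot$ decision node, and both arcs of a stochastic node with coefficients $w_i,1-w_i$ --- which are precisely the arcs allowed in a \emph{valid} path carrying the weights $u_i$ of \cref{eq:pathweight}. The unrolling terminates at leaves (contributing $0$) and at $d$-nodes (contributing $v(r_d^{+})-v(r_d^{-})$), so $\delta(root)$ becomes a sum over valid root-to-$r_d$ paths $\ell$ of $\bigl(\prod_{r_i\in\ell}u_i\bigr)\bigl(v(r_d^{+})-v(r_d^{-})\bigr)$; grouping by the endpoint $r_d$ and invoking the definition of $\pi(r_d)$ gives \cref{eq:partial-derivative-difference}. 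To make this airtight I would phrase the unrolling as an induction over nodes in the variable order, proving for each $r$ that $\delta(r)=\sum_{r_d}\pi_r(r_d)\,(v(r_d^{+})-v(r_d^{-}))$ with $\pi_r$ the path weight computed from paths starting at $r$, and then set $r=root$.

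The step I expect to require the most care is the bookkeeping for \emph{reduced} OBDDs, whose arcs may skip variables: I must argue that a valid path which crosses the $d$-level without hitting a $d$-node reaches a node with $\delta=0$ and hence contributes nothing, and that the convention $u_i=1$ on decision arcs, combined with the validity conditions, makes the coefficients produced by the unrolling coincide exactly with $\prod_{r_i\in\ell}u_i$. The remaining ingredients are routine: the base case of the induction at the leaves, and the earlier observation that $v^{+}$ and $v^{-}$ agree below every $d$-node, which is precisely the place where orderedness of the diagram is needed.
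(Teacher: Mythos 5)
Your proof is correct, and it reaches the formula by a genuinely different (and somewhat more careful) route than the paper. The paper's proof rests on a single asserted identity: $f$ is decomposed at level $m$ as $\sum_{r_m\in\text{OBDD}_m}\pi(r_m)\bigl(w_m v(r_m^+)+(1-w_m)v(r_m^-)\bigr)$, with the remark that $\pi(r_m)$ involves only variables before $x_m$ and the child values only variables after it; since $f$ is linear in $w_m$, differentiating symbolically with respect to $w_m$ gives the claim immediately. You instead never differentiate a polynomial: you read the derivative as the finite difference of two bottom-up evaluations, define $\delta(r)=v^{+}(r)-v^{-}(r)$, show it satisfies a homogeneous recurrence whose only sources are the $d$-nodes, and unroll from the root by induction, recovering $\pi(r_d)$ as the accumulated coefficient along valid paths. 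The two arguments share the same underlying separation (path weights from above, values from below, linearity in the middle), but your key lemma is the recurrence for $\delta$ rather than the level decomposition of $f$. What your version buys is twofold: it actually proves, rather than asserts, the combinatorial identity that the paper calls an observation; and it explicitly disposes of the reduced-OBDD subtlety that valid root-to-leaf paths may skip the $d$-level entirely (those contribute $\delta=0$), a case in which the paper's displayed decomposition of $f$ is, strictly read, incomplete --- harmlessly so for the derivative, but your treatment makes that harmlessness explicit. The paper's version buys brevity and generalizes verbatim to differentiating with respect to stochastic weights as well.
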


\begin{proof}

Let $f(x_1, \ldots, x_n)$ be the polynomial associated with an OBDD with variable order $x_1 < \ldots < x_n$. Let $r_i$ be a node labeled with $x_i$ and let $w_i$ be the positive weight of that variable. Observe that for any variable $x_m$ (with $x_1 \leq x_m \leq x_n$)
we can write $f$ as
\begin{equation}
	\begin{split}
	 f&(x_1, \ldots, x_m, \ldots, x_n) = \\
	  &\sum_{r_m \in \text{ OBDD}_m} \pi(r_m)  \left( w_m v(r_m^+) + (1-w_m)v(r_m^-)\right),
	\end{split}
\end{equation}
where the values of the the hi and lo child of $r_m$ are $v(x_m^+)$ and $v(x_m^-)$, respectively, following \cref{eq:values}. Recall that in the expression of the path weight of $r_m$, $w_m$ does not occur.
Note also that $v(r_m^+)$ and $v(r_m^-)$ are expressed in variables $x_i > x_m$ only. The derivative of this formula (with respect to $w_m$) corresponds to the claim in the theorem.
\end{proof}

We use the observation above to create an $O(m)$ algorithm for calculating all derivatives in two stages:
\begin{itemize}
\item a top-down pass over the complete OBDD for calculating all path weights;
\item a bottom-up pass for calculating the values for all nodes in the complete OBDD, calculating the derivatives for each variable in the process.
\end{itemize}

\begin{algorithm}[ht]
 \caption{Compute path weights.} \label{algo:path-weights}
 \begin{algorithmic}[1]
	\State $\pi(root) \gets 1$
	\IFor {$r_i \in \text{OBDD}$ and $r_i \neq root$} {$\pi(r_i) \gets 0$} \EndIFor
	\For {each internal node $r_i \in \text{OBDD}$ (in topological order)}
		\If {$r_i$ corresponds to decision variable $d_i$}
			\If {$d_i$ is {\em true} or {\em free}}
				\State $\pi\left(r_i^+\right) \gets \pi\left(r_i^+\right) + \pi(r_i)$
			\Else
				\State $\pi\left(r_i^-\right) \gets \pi\left(r_i^-\right) + \pi(r_i)$
			\EndIf
		\Else
			\State $\pi\left(r_i^+\right) \gets \pi\left(r_i^+\right) + w_i \pi(r_i)$
			\State $\pi\left(r_i^-\right) \gets \pi\left(r_i^-\right) + \left(1 - w_i\right) \pi(r_i)$
		\EndIf
	\EndFor
 \end{algorithmic}
\end{algorithm}

\begin{algorithm}[ht]
 \caption{Compute values.} \label{algo:values}
 \begin{algorithmic}[1]
	\State $v(0) \gets 0$, $v(1) \gets 1$ \Comment Leaf nodes
		\For {each internal node $r_i \in \text{OBDD}$ (in reversed topological order)}
		\If {$r_i$ corresponds to decision variable $d_i$}
			\If {$d_i$ is {\em true} or {\em free}}
				\State $v(r_i) \gets v\left(r_i^+\right)$
			\Else
				\State $v(r_i) \gets v\left(r_i^-\right)$
			\EndIf
		\Else
			\State $v(r_i) \gets w_i v\left(r_i^+\right) + \left(1 - w_i\right) v\left(r_i^-\right)$
		\EndIf
	\EndFor
 \end{algorithmic}
\end{algorithm}

The pseudo codes for these passes are given in \cref{algo:path-weights,algo:values}, respectively.
\begin{algorithm}[ht]
 \caption{Enforce domain consistency.} \label{algo:consistency}
 \begin{algorithmic}[1]
	\For {each {\em free} variable $d$}
		\State $\Delta_d \gets 0$
		\For {each node $r \in \text{OBDD}_d$}
			\State $\Delta_d \gets \Delta_d + \pi(r) \cdot \left(v\left(r^+\right) - v\left(r^-\right)\right)$
		\EndFor
			\If {$v(root) - \Delta_d < \theta$}
				\State remove {\em false} from domain of $d$
			\EndIf
	\EndFor
 \end{algorithmic}
\end{algorithm}

Once these passes are completed, we can recompute the derivatives for all decision variables that are still free, and evaluate \cref{eq:requirement} for each of those to see if we can remove {\em false} from their domain, such that we can enforce domain consistency. The pseudo code for this is provided in \cref{algo:consistency} for clarity, but can be integrated with \cref{algo:values}.
Clearly, the overall calculation finishes in $O(n+m)$ time. 

\subsection{Traversing Part of the OBDD}
\label{subsec:traversing}
For efficient propagation, it is desirable that the complexity of the algorithm above can be reduced more; we should avoid traversing unnecessary parts of the OBDD as much as possible. Building on the ideas presented earlier, some observations allow for more efficient propagation in practice.

As we observed before, the expression for the path weight of an OBDD node labeled with variable $x_m$ (\cref{eq:pathweight}) only contains variables $x_i < x_m$. 
We thus conclude that fixing a decision variable $d$ can only affect the path weights of nodes {\em below} the nodes labeled with that variable $d$.

Moreover: because we take the hi arc both from decision nodes that are {\em free} and from those that are {\em true}, path weights below free decision nodes are not changed at all when we fix a decision node to {\em true}.

Therefore: whenever we fix a decision variable, our propagator only needs to call \cref{algo:path-weights} if we fix it to {\em false}, and even then it only has to traverse the part of the diagram that is below the nodes labeled with that decision variable.

A similar argument holds for the values of the OBDD nodes. Since they are computed in a bottom-up traversal of the OBDD, fixing a variable can only affect the values of the nodes labeled with that variable themselves, and those {\em above} them in the diagram. Again: only fixing a variable to {\em false} actually requires the propagator to update values at all.

We can further narrow down the parts of the diagram that need to be considered. Consider the decision variable that occurs closest to the root of the OBDD. We do not need to maintain the values for any of the nodes in the OBDD above it, as we will never need to calculate the derivative for any variable in this part of the diagram. Similarly, consider the variable closest to the leaves; we do not need to maintain path weights for its descendants either. It can be shown that by only maintaining the part of the OBDD between two borders (the {\em active} part of the OBDD), one can calculate the derivatives exactly, as well as calculate the true value of the optimization criterion without propagating towards the root.

\section{Conclusion and Outlook}

Many problems in AI can be seen as SCOPs. In this work we proposed a new method for solving SCOPs that are modeled using PLP techniques, specifically: SC-Problog~\cite{deraedt2007:ProbLog,latour2017:CombiningSCOAndPP}. In SC-ProbLog, we can convert the SCOP's stochastic constraints into constraints on OBDDs. 
This work was motivated by the observation that an earlier approach was not built on domain consistent propagation. We sketched a propagator for such OBDD constraints that does enforce domain consistency in linear time.

We limited our attention to a representation of distributions in OBDDs. The advantage of this representation is that we can clearly identify parts of the diagram {\em above}  and {\em below } a decision variable; we argued that this can be used to limit the {\em active} part of the diagram and to limit which type of calculation is performed on which part of the diagram.

Several details were omitted from this paper. We did not include extensive details regarding the maintenance of active parts of OBDDs or the incremental calculation of optimization criteria. Furthermore, we did not include an extension to constraints on sums of probabilities, or to constraints of the form $P(\phi \mid \sigma) \leq \theta$.

Concrete next steps are the implementation of our approach, its evaluation on data, a comparison of different approaches for maintaining active parts of OBDDs, and its extension to other types of diagrams. 

\subsubsection{Acknowledgements.} This research was supported by the Netherlands Organisation for Scientific Research (NWO). Behrouz Babaki is supported by a postdoctoral scholarship from IVADO.

\clearpage
\bibliographystyle{aaai}
\bibliography{subfiles/starai2018}

\end{document}